\documentclass[conference]{IEEEtran}
\IEEEoverridecommandlockouts

\usepackage{cite}
\usepackage{amsmath,amssymb,amsfonts}
\usepackage{algorithmic}
\usepackage{graphicx}
\usepackage{textcomp}
\usepackage{xcolor}
\usepackage{breqn}
\usepackage{amsthm}
\usepackage{algorithm}
\usepackage{mathtools}
\usepackage{multirow}
\usepackage{color}
\usepackage{subfigure}
\usepackage{placeins}
\usepackage{hyperref}
\usepackage{bm}
\usepackage{lipsum}
\usepackage{tikz}
\usetikzlibrary{positioning}
\usepackage{booktabs}        

\newtheorem{theorem}{Theorem}

\newtheorem{definition}{Definition}

\newcommand{\X}{{\mathcal{X}}}
\newcommand{\U}{{\mathcal{U}}}
\newcommand{\K}{{\mathcal{K}}}

\newcommand{\cS}{{\mathcal{S}}}

\def\BibTeX{{\rm B\kern-.05em{\sc i\kern-.025em b}\kern-.08em
	T\kern-.1667em\lower.7ex\hbox{E}\kern-.125emX}}

\begin{document}

\title{Robust Optimization Approach and Learning Based Hide-and-Seek Game for Resilient Network Design\\
	\thanks{A shorter version of this work has been accepted for publication in the \emph{IEEE International Conference on Communications (ICC) 2026} (manuscript \#1571207467). This manuscript substantially extends the conference submission with new models, solution methods, and numerical results.}
}

\author{
	\IEEEauthorblockN{Mohammad Khosravi and Setareh Maghsudi}
	\IEEEauthorblockA{Learning Technical Systems, Ruhr University of Bochum, Bochum, Germany}}

\maketitle


\begin{abstract}
		We study the design of resilient and reliable communication networks in which a signal can be transferred only up to a limited distance before its quality falls below an acceptable threshold. When excessive signal degradation occurs, regeneration is required through regenerators installed at selected network nodes. In this work, both network links and nodes are subject to uncertainty. The installation costs of regenerators are modeled using a budgeted uncertainty set. In addition, link lengths follow a dynamic budgeted uncertainty set introduced in this paper, where deviations may vary over time. Robust optimization seeks solutions whose performance is guaranteed under all scenarios represented by the underlying uncertainty set. Accordingly, the objective is to identify a minimum-cost subset of nodes for regenerator deployment that ensures full network connectivity, even under the worst possible realizations of uncertainty. To solve the problem, we first formulate it within a robust optimization framework, and then develop scalable solution methods based on column-and-constraint generation, Benders decomposition, and iterative robust optimization. In addition, we formulate a learning-based hide-and-seek game to further analyze the problem structure. The proposed approaches are evaluated against classical static budgeted robust models and deterministic worst-case formulations. Both theoretical analysis and computational results demonstrate the effectiveness and advantages of our methodology.
\end{abstract}

\begin{IEEEkeywords}
Survivable networks, robust optimization, learning based hide-and-seek, regenerator placement
\end{IEEEkeywords}


\section{Introduction}
\label{sec:introduction}

In recent years, network infrastructures have become one of the most fundamental concepts across various technology-related domains, ranging from international data transmission to supporting critical systems in modern societies. As these infrastructures have become increasingly integrated into various aspects of life, including social, economic, and technological aspects, the need to ensure their stability and performance has grown substantially. As a result, the concept of \textit{network resilience} has introduced to address this need. In fact, network resilience refers to the ability of a network to maintain its services, particularly in the presence of disruptions or adversarial attacks. In other words, resilient design ensures that networks can withstand unpredictable events, mitigate their immediate impacts, and adapt to continue essential long-term operations.

One of the key factors that must be considered in designing a resilient network is the physical limitation on how far a signal can travel without degrading in quality. Accordingly, this study focuses on resilient communication network design based on a threshold $d_{max}$ that represents the maximum distance a signal can traverse without loss of quality. To address this issue, signal regenerators are deployed at selected network nodes to restore and maintain signal integrity.

Since regenerators are relatively expensive devices (see \cite{mertzios2012placing}), an important objective in resilient network design is to minimize the number of regenerators installed while still ensuring the network remains fully operational. This gives rise to the classical \textit{Regenerator Location Problem} (RLP). In its seminal form, introduced in \cite{sen2008sparse}, various approaches are presented for identifying the smallest subset of nodes at which regenerators should be placed, given a maximum possible distance $d_{max}$, to ensure connectivity among all nodes. In this sense, establishing communication between nodes separated by a distance more than $d_{max}$ is only possible by passing through nodes equipped with regenerators.

Based on previous studies (see \cite{borella1997optical,mukherjee2000wdm,zymolka1999examination}), the main causes of signal degradation can be summarized to transmission impairments such as attenuation, dispersion, and cross-talk. Inspired by \cite{chen2010regenerator}, there are three common methods for signal regeneration: \textit{1R}, which amplifies signals to restore their strength; \textit{2R}, which involves both amplification and reshaping to correct distortions; and \textit{3R}, which combines amplification, reshaping, and retiming to fully recover the signal’s quality and timing.

In addition to this physical constraint, real-world problems often involve varying installation and maintenance costs for regenerators due to geographic location, responsible service providers, and seasonal weather conditions. To account for these factors, a weighted version of the RLP was introduced in \cite{chen2010regenerator}, in which the objective alters from minimizing the number of regenerators to minimizing their overall installation and maintenance cost.

In this paper, we focus on the \textit{Robust Weighted} version of the RLP, where both edge lengths and node costs are defined within a \textit{budgeted uncertainty set} (refer to \cite{bertsimas2003robust}). We begin by summarizing all sources of transmission impairments along the network edges. Accordingly, each edge is assumed to have a nominal length corresponding to its physical distance, along with a nominal deviation range that reflects the maximum possible utilization of that edge during signal transmission, which in turn leads to the highest degree of signal degradation. This factor may vary over time due to dispersion and cross-talk caused by the traffic traversing the edge at any given time. Thus, we define a \textit{dynamic budgeted uncertainty} set in which the length deviation of each link is time-dependent and bounded by its nominal maximum.

Similarly, we assume that the installation and maintenance costs of regenerators at network nodes also arise from a budgeted uncertainty set. In both contexts, the adversary is given a budget that allows it to activate deviations for a subset of nodes or edges. The robust weighted RLP can therefore be formulated as follows.

\begin{definition}
	Consider a communication network in which both edge lengths and node costs are uncertain and given through a budgeted uncertainty set. Let $d_{\max}$ denotes the maximum possible signal transmission distance. The objective of the robust weighted RLP is to identify a subset of network nodes with minimum total cost, under adversarial attack, on which regenerators are to be placed. In addition, the placement must guarantee the existence of a valid path between every pair of nodes such that no segment of any path without an internal regenerator exceeds the maximum distance $d_{\max}$.
\end{definition}


\subsection{Application Areas}
\label{subsec:application}
\subsubsection*{Optical Networks}
\label{subsec:optical-networks}
Optical networks are among the most fundamental components of global communication infrastructure, enabling the simultaneous transmission of large volumes of data with high bandwidth and low latency. The feasible transmission distance of optical fiber links depends on environmental and technological factors, and the installation and maintenance costs of optical regenerators vary with geographic location. The robust weighted RLP accounts for these uncertainties through budgeted uncertainty sets for both node costs and edge lengths, while ensuring that regenerator placement maintains network connectivity and remains reliable and cost-effective under fluctuating conditions (see \cite{chen2010regenerator,sen2008sparse,yildiz2015regenerator}).

\subsubsection*{Satellite Communication Networks}
\label{subsec:satellite-communication-networks}
Satellite communication systems serve as a primary means of connecting remote and underserved regions by establishing long-distance wireless links between ground stations and orbiting satellites. The deployment and maintenance costs of regenerative equipment at ground stations can be highly uncertain due to varying infrastructure requirements, while atmospheric effects introduce variability in effective transmission distances. The robust weighted RLP provides a systematic approach for flexible placement of regenerative nodes, ensuring that signal quality is preserved across dynamic and uncertain satellite channels (see \cite{ma2022uncertainty,kodheli2020satellite,jiang2022robust}).

\subsubsection*{Emergency and Public Safety Networks}
\label{subsec:emergency-and-public-safety-networks}
Emergency response and public safety networks must be designed to maintain functionality under adverse and uncertain conditions, such as natural disasters or infrastructure failures. In such settings, both the costs associated with deploying mobile or temporary regeneration units and the availability of feasible routes and links are uncertain and may deviate from their nominal values. Applying robust weighted RLP models enables decision makers to strategically position regenerative devices and ensure that critical communication paths remain operational even under worst-case scenarios (see \cite{wan2017smart,vamvakas2019risk,casoni2019emergency}).


\subsection{Related Works}
\label{subsec:relatedworks}
The classical version of the RLP has been extensively studied in the literature. To begin with, it has been shown that the RLP is NP-complete \cite{chen2010regenerator,sen2008sparse}. In addition, \cite{chen2010regenerator} formulated the RLP as a Steiner arborescence problem with a unit-degree constraint at the root node and developed a branch-and-cut algorithm to solve it. A polyhedral investigation of the RLP is presented in \cite{li2017regenerator}, while a flexible variant is examined in \cite{yildiz2017regenerator}. The generalized regenerator location problem, introduced in \cite{chen2015generalized}, focuses on connecting only a selected subset of nodes by installing regenerators on another specific subset.

Several extensions of the RLP have also been investigated. In \cite{yildiz2015regenerator}, resilient network design is studied under different survivability conditions, including resilience against regenerator failures and node failures. Their results indicate that the latter case is computationally more demanding. A fault-tolerant version of the RLP is proposed in \cite{rahman2014optimal}, where the authors introduce an integer programming formulation containing an exponential number of set-covering inequalities and develop a branch-and-cut approach to solve it efficiently. Another study, \cite{li2020branch}, proposes a flow-based model that accounts for all possible link failure scenarios and solves it using a branch-and-Benders-cut algorithm inspired by Benders decomposition \cite{benders2005partitioning}. Their computational analysis demonstrates that the latter method offers significantly better solution quality, though both methods face scalability challenges on large instances.

Research focusing specifically on the robust version of the RLP remains relatively limited. In \cite{yan2018robust}, a greenfield scenario with time-varying traffic demands is examined, and a regenerator assignment strategy based on the probability of needing a regenerator at each node is proposed. Robust optimization approaches have been applied to related network design problems, typically considering uncertainty in link capacities, costs, or traffic demands (see \cite{bertsimas2011theory,montemanni2005benders,alves2015robust,jackiewicz2025recoverable,pessoa2015robust}). In our recent work \cite{khosravi2026robust}, we focused on the robust weighted variant of the fault-tolerant RLP, where a subset of network links may fail and regenerator installation costs arise from a discrete uncertainty set. We first proved that this problem is NP-complete, and then proposed both flow-based and cut-based integer programming formulations. We further introduced an iterative solution framework suitable for large-scale networks. Our results show that this approach outperforms the method from \cite{li2020branch}.


\subsection{Our Contribution}
\label{subsec:contribution}
In summary, the contributions of this paper are as follows.
\begin{itemize}
	\item We introduce a robust version of the weighted RLP in which node installation costs are modeled using a static budgeted uncertainty set. Furthermore, we define a new dynamic budgeted uncertainty set to model edge lengths.
	\item We formulate the robust placement problem as a mixed-integer programming model under both static and dynamic budgeted uncertainty sets, ensuring feasibility under the corresponding worst-case scenarios.
	\item We develop three alternative solution methods for the dynamic formulation that are suitable for large-scale instances. These methods include column-and-constraint generation, Benders decomposition, and iterative robust optimization approaches.
	\item We propose a novel learning-based hide-and-seek (HSL) framework that frames the robust optimization problem as a repeated game between a learner and an adversary.
	\item We conduct a numerical study to evaluate the performance of our approaches in comparison to the static robust formulations and the deterministic worst-case scenario, demonstrating that our methods provide cost-efficient and resilient placement solutions.
\end{itemize}


\subsection{Organization}
\label{subsec:organization}
The remainder of this paper is organized as follows. In Section~\ref{sec:problem-statement}, we formally define the robust weighted regenerator location problem and introduce the corresponding uncertainty sets. We also establish the computational complexity of the problem. Section~\ref{sec:mathematical-formulations} presents the core modeling framework. We first introduce a graph transformation that is necessary to obtain a well-defined weighted RLP. We then describe an equivalent problem formulation that forms the basis of the mathematical models developed in this study. In addition, a preprocessing procedure aimed at reducing the computational effort is introduced prior to presenting the integer programming formulations. This section concludes by detailing the mathematical formulations of the weighted RLP under the deterministic worst-case scenario, as well as under static and dynamic robust optimization settings. In Section~\ref{sec:alternative-solution-approaches}, we develop alternative solution methods for the dynamic robust weighted RLP, including column-and-constraint generation, Benders decomposition, and iterative robust optimization approaches. Section~\ref{sec:learning-based-hide-and-seek-game} introduces the proposed learning-based hide-and-seek framework to address the robust problem. Section~\ref{sec:experiments} reports the computational results and provides a comparative analysis of all methods proposed in this paper. Finally, Section~\ref{sec:conclusion} concludes the paper and outlines potential directions for future research. For completeness, Appendix~\ref{app:proof-theorem-complexity} presents the formal proof of the problem’s computational complexity. Appendix~\ref{app:performance-profile} provides a detailed description of the performance profile methodology used in the numerical evaluation.




\section{Problem Statement}
\label{sec:problem-statement}

Consider a network represented by a graph $G=(V,E,D)$, where $V$ is the set of vertices of size $n$, $E$ is the set of edges of size $m$, and $D$ is a matrix specifying the edge lengths. Let $d_{max}$ denote the maximum possible distance for signal transmission. It is straightforward to verify that any edge whose length exceeds $d_{max}$ would lead to signal transmission failure and can therefore be removed from the network. Hence, we assume throughout this paper, that all edges have length less than or equal to $d_{max}$.

As previously noted, the objective of the classical RLP is to identify a subset of nodes of minimum cardinality at which regenerators are to be placed. In the weighted RLP, the objective turns into minimizing the total installation cost, since real-world deployment and maintenance costs vary across nodes. In this paper, we focus on the robust weighted RLP, in which the node installation costs and edge lengths are subject to uncertainty.

Robust optimization problems require an uncertainty set $\U$ describing possible scenarios. The structure of this set plays a key role in determining both the theoretical and practical difficulty of the problem. One of the most commonly used uncertainty sets is the budgeted uncertainty set. Two variants are typically considered, discrete and continuous budgeted sets \cite{goerigk2024benchmarking}. In the discrete variant, the adversary’s decision variables are binary, whereas in the continuous version they are continuous. In this paper, we focus on the discrete budgeted uncertainty set and, for simplicity, refer to it as the budgeted uncertainty set.

We assume that the cost of installing regenerators at the network nodes is defined through a budgeted uncertainty set of the form:

\footnotesize
\[
\mathcal{U}_v = \Big\{ \bm{\tilde{c}}:\ \tilde{c}_v = \underline{\tilde{c}}_v + \tilde{d}_v z_v,\ \sum_{v} z_v \le \Gamma_v,\ z_v \in \{0,1\} \Big\}
\]
\normalsize
where each node $v \in V$ has a nominal installation cost $\underline{\tilde{c}}_v$, a maximum deviation $\tilde{d}_v$, and the adversary may increase the costs of at most $\Gamma_v$ nodes to their upper bound $\underline{\tilde{c}}_v + \tilde{d}_v$. Since this uncertainty set does not vary over time, it represents a static version; therefore, we refer to it as the static budgeted uncertainty set.

To bring the model closer to real-world applications, we introduce a new budgeted uncertainty structure for edge lengths, which we call the dynamic budgeted uncertainty set. This set is time-dependent and may vary periodically. It can be formulated as follows:

\footnotesize
\[
\mathcal{U}^t_e = \Big\{ \bm{c}^t:\ c_e^t = \underline{c}_e + d_e^t y_e^t,\ d_e^t \in[0,d_e], \ \sum_{e} y_e^t \le \Gamma_e,\ y_e^t \in \{0,1\} \Big\}\\
\]
\normalsize
where each edge $e\in E$ is characterized by a nominal length $\underline{c}_e$ and a time-dependent deviation $d_e^t$. At each time $t$, a realization $d_e^t\in[0,d_e]$ is drawn, and (similar to the static case) the adversary may select up to $\Gamma_e$ edges to raise to their upper bound $\underline{c}_e + d_e^t$.

To address the robust counterpart of the RLP under these budgeted uncertainty sets, we adopt the classical Min--Max criterion. The goal is to identify a minimum-cost solution while accounting for the fact that the adversary chooses the scenarios that maximizes our objective. This leads to the following formulation:

\footnotesize
\begin{align*}
\min_{\pmb{x}\in\X} \ \max_{\pmb{\tilde{c}}\in\U_v} \ & \ \pmb{\tilde{c}}^{\intercal} \pmb{x}\\
\text{s.t.} \quad & \ \X\sim\U_e
\end{align*}
\normalsize

Note that $\U_v$ directly influences the objective function, whereas $\U_e^t$ forms the network structure and therefore affects the constraints of the problem. Since these uncertainty sets change the decision making process, it is essential to understand the computational complexity of the problem. Consequently, before introducing solution methods for the robust weighted RLP under the given budgeted uncertainty sets, we first establish its computational complexity.

\begin{theorem}\label{theorem:complexity}
	The robust regenerator location problem under the budgeted uncertainty set is NP-hard.
\end{theorem}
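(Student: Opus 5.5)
The plan is to show that the deterministic (weighted) regenerator location problem is a special case of the robust problem, and then invoke its known NP-hardness. The classical RLP was shown to be NP-complete in \cite{chen2010regenerator,sen2008sparse}. Given any RLP instance, I would build a robust instance with zero uncertainty budgets, $\Gamma_v=0$ and $\Gamma_e=0$, or equivalently all deviations $\tilde{d}_v=0$ and $d_e=0$. I would also set unit nominal costs $\underline{\tilde{c}}_v=1$ and nominal lengths $\underline{c}_e$ equal to the original edge lengths, keeping the same $d_{\max}$.

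With these choices, $\U_v=\{\underline{\tilde{c}}\}$ is a singleton, so the inner maximization $\max_{\tilde{c}\in\U_v}\tilde{c}^\intercal x$ reduces to $\sum_v x_v$. For every $t$, $\U_e^t$ likewise contains only the nominal length vector. The feasible set $\X$ is therefore exactly the set of regenerator placements for which every pair of nodes is joined by a path whose regenerator-free segments have length at most $d_{\max}$ under the nominal lengths. The robust instance thus has the same feasible solutions and objective values as the original RLP instance. The construction is clearly polynomial, so an optimal solution of the robust problem solves the RLP, and NP-hardness follows.

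To avoid relying on a degenerate budget, I would also note a stronger version. Take $\Gamma_v,\Gamma_e\ge 1$ and deviations that are positive but negligible, with $d_e$ small enough that no path's feasibility status relative to $d_{\max}$ changes, say smaller than the gap between $d_{\max}$ and any path length below it divided by $m$. Take $\tilde{d}_v$ small enough that the worst-case cost never changes the ranking of solutions with distinct cardinality, e.g.\ $\sum_v\tilde{d}_v<1$. The worst-case objective then lies in $[|S|,|S|+1)$ for a placement $S$, so minimizing it still minimizes $|S|$. The feasibility check against all time-dependent realizations coincides with the nominal check.

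The main obstacle is being careful about what ``feasibility under $\U_e$'' means in the paper's formulation $\X\sim\U_e$. One possible reading is that connectivity must hold for every realization and time. The other is that it must hold under the worst case only. I would argue that both readings coincide with the nominal instance in the constructions above, since every realization yields the same set of admissible segments. This is the only step needing more than a direct substitution.
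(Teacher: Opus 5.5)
Your proof is correct, and it takes a more direct route than the paper's. The paper's appendix first invokes the Bertsimas--Sim result that a robust combinatorial problem under budgeted uncertainty reduces to $O(n)$ nominal problems, and concludes that robust and nominal complexities ``coincide.'' It uses this to note that the robust shortest-path (graph-transformation) stage stays polynomial. It then transfers the NP-hardness of the nominal RLP to the robust model.

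That decomposition only gives the upper-bound direction (robust is no harder than nominal). The hardness direction rests on exactly the special-case embedding you write down ($\Gamma_v=\Gamma_e=0$ or zero deviations, unit costs), which the paper leaves implicit. So your argument states explicitly the one step that actually proves the theorem. The paper's route gives some extra information that the theorem does not need: edge-length uncertainty adds no difficulty, so the hardness lies entirely in the placement stage.

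Your non-degenerate variant (positive budgets, tiny deviations) is a useful addition the paper does not have, but as stated it has a boundary glitch. A regenerator-free segment of nominal length exactly $d_{\max}$ becomes infeasible under any positive deviation the adversary puts on it, so the ``gap'' you divide by can be zero. With integer data, fix this by replacing $d_{\max}$ with $d_{\max}+\tfrac12$, which leaves nominal feasibility unchanged, and taking every $d_e<1/(2m)$. Then any nominally feasible segment has worst-case length below $d_{\max}+\tfrac12$, and nominally infeasible segments stay infeasible because deviations are nonnegative.
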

\begin{proof}
	See Appendix~\ref{app:proof-theorem-complexity}.
\end{proof}


\section{Mathematical Formulations}
\label{sec:mathematical-formulations}

In this section, we begin by presenting a graph transformation that simplifies the mathematical formulation of the problem. This transformation, inspired by the approach in \cite{sen2008sparse}, is a key component for implementing our solution methods. We then introduce an equivalent problem to the robust weighted RLP, which serves as the foundation for deriving the mathematical models. Next, we describe a preprocessing procedure that must be applied as the initial step of the solution process, as it significantly reduces computation time.

We subsequently develop three mathematical formulations for the weighted RLP. We first derive the model for the deterministic weighted RLP, in which both edge lengths and node costs are fixed at their respective upper bounds. We then extend the methods to two robust settings. The first adopts a classical robust optimization perspective, assuming that both edge lengths and node costs are defined by static budgeted uncertainty sets. The second, which represents the main focus of this paper, considers edge lengths given by a dynamic budgeted uncertainty set while node costs remain subject to static budgeted uncertainty.


\subsection{Graph Transformation}
\label{sec:graph-transformation}

Given a graph $G$ and the maximum possible transmission distance $d_{max}$, we begin by running an all-pairs shortest path algorithm on $G$. An edge $(i,j)$ is retained only if the shortest-path distance between nodes $i$ and $j$ does not exceed $d_{max}$; the collection of all such edges forms the set $E_M$. The resulting graph $M=(V,E_M)$ is referred to as the \textit{transformed graph}. If $M$ is complete, then every pair of nodes in the RLP instance can communicate without the need for regenerators. Otherwise, any pair of nodes not connected by an edge in $M$ (referred to as NDC pairs) can communicate only through a path whose internal nodes are equipped with regenerators. Consequently, the RLP can be equivalently reformulated on the transformed graph $M$ as follows:
\begin{definition}[RLP on M]
	Determine the smallest subset of nodes at which regenerators are installed such that every NDC pair in $M$ can communicate via a path whose internal nodes all contain regenerators.
\end{definition}
In instances where edge lengths are uncertain, solving robust shortest-path problems is required to determine the shortest distances between every pair of nodes. For this reason, each of our formulations begins by adapting the integer programming model for the shortest path problem (SHP) from \cite{montemanni2005benders} to account for uncertainty. This allows us to construct the transformed graph $M$. We then proceed analogously by modifying the flow-based RLP formulation introduced in \cite{khosravi2026robust}.


\subsection{Equivalent Problem}
\label{sec:equivalent-problem}

In this section, in a similar way used in \cite{khosravi2026robust}, we introduce problems that are equivalent to the RLP and serve as the foundation for the solution methods developed in this work.
\begin{definition}[Dominating Set]
	Given an undirected graph $G^{\prime}=$($V^{\prime},E^{\prime}$), a subset $T\subseteq V^{\prime}$ is a dominating set if every vertex $v^{\prime}\in V^{\prime}\backslash T$ is adjacent to at least one vertex in $T$.
\end{definition}
A \textit{connected dominating} set is a dominating set $T$ for which the induced subgraph $G^{\prime}_T=$($T$,$E^{\prime}_T$) is connected, where $E^{\prime}_T$ denotes the set of edges in $E^{\prime}$ with both endpoints in $T$. The problem of identifying a connected dominating set of minimum cardinality is known as the \textit{minimum connected dominating set problem} (MCDSP).

Prior work \cite{chen2010regenerator,lucena2010reformulations,gendron2014benders} has established that the MCDSP and the RLP are equivalent on the transformed graph $M$, using the \textit{maximum leaf spanning tree problem} as an intermediate step in the reduction. This equivalence does not hold on the original graph $G$, but it is central to our modeling on $M$. Motivated by this connection, an equivalent definition of the RLP (expressed in terms of connected dominating sets and used to derive our mathematical models) can be stated as follows:
\begin{definition}[Equivalent RLP on $M$]
	Determine a subset of nodes of minimum cardinality for regenerator placement such that the subgraph induced by this subset is connected and every node outside the subset is adjacent to at least one node within it.
\end{definition}


\subsection{Preprocessing}
\label{subsec:Preprocessing}

Before proceeding to the formulations, we describe a preprocessing step to practically reduce the computational time. To tackle the problem effectively, we begin by applying a heuristic method to reduce the computational time needed. This reduction is crucial to simplify subsequent steps of the solution process. A key observation is that any node on $M$ with a degree of one requires the placement of regenerators on its neighbor, because such a node must rely on its only neighbor for communication with the rest of the network. Consequently, the first step involves identifying all nodes in the network with a degree of one. This can be calculated in a polynomial time with $O$($n^2$). Once these nodes are identified, their neighboring nodes are added to the set of solution $\mathcal{L}^{0}$, which represents the initial placement of regenerators. Following this initial placement, we can start solving our models with a given partial solution referred to as the warm-start. This technique will reduce the solution time by reducing the branch-and-bound tree size for the out-of-the-box solvers like CPLEX.

\begin{algorithm}
	\caption{Preprocessing Algorithm}
	\label{alg:preprocessing}
	\begin{algorithmic}[1]
		\STATE \textbf{Input:} Network graph $G=(V,E)$
		\vspace{3pt}
		\STATE Initialize $\mathcal{L}^{0} \leftarrow \emptyset$.
		\STATE Compute the degree $\deg(v)$ of each node $v \in V$.
		\FOR{each node $v \in V$}
		\IF{$\deg(v) = 1$}
		\STATE Let $u$ be the unique neighbor of $v$.
		\STATE Add $u$ to the initial placement set: $\mathcal{L}^{0} \leftarrow \mathcal{L}^{0} \cup \{u\}$.
		\ENDIF
		\ENDFOR
		\STATE Mark $\mathcal{L}^{0}$ as the warm-start placement set.
		\STATE \textbf{Output:} Initial regenerator placement $\mathcal{L}^{0}$.
	\end{algorithmic}
\end{algorithm}


\subsection{Deterministic Worst-Case}
\label{subsec:deterministic-worst-case}

In this setting, we address the nominal versions of both the SHP and the RLP. That is, the problems are treated as fully deterministic, with no uncertain parameters. Thus, all edge lengths and node costs are fixed at their upper bounds: node costs take the value $\underline{\tilde{c}}_v + \tilde{d}_v$ for all $v \in V$, and edge lengths are set to $\underline{c}_e + d_e$ for all $e\in E$. Using these values, we first formulate the SHP to construct the underlying network.

Given a source node $s \in V$ and a destination node $t \in V$, the classical shortest path problem determines a path of minimum total. Let $x_e \in \{0,1\}$ denote a binary decision variable with $x_e = 1$ if edge $e$ is selected for the path from $s$ to $t$, and $x_e = 0$ otherwise. The resulting formulation of the shortest path problem is as follows:

\footnotesize
\begin{align*}
\min \ & \ \sum_{e \in E} \ (\underline{c}_v + d_v) \ x_e\\
\text{s.t.} \ & \ \sum_{e \in \delta^+_v} x_e - \sum_{e \in \delta^-_v} x_e =
\begin{cases}
1 & \text{if } v = s, \\
-1 & \text{if } v = t, \\
0 & \text{otherwise},
\end{cases}
\quad & \forall v \in V,\\
& \ x_e \in \{0,1\} & \forall e \in E
\end{align*}
\normalsize
where, $\delta^+_v$ and $\delta^-_v$ denote the out-degree and in-degree of node $v$, respectively. Since all edge lengths are non-negative, we can use this formulation in our setting. Once the transformed graph $M$ is obtained, we proceed by formulating the flow-based IP to solve the RLP. To this end, we introduce a binary decision variable $\tilde{x}_v \in \{0,1\}$, where $\tilde{x}_v = 1$ indicates that node $v$ is selected for regenerator installation, and $\tilde{x}_v = 0$ otherwise.

\footnotesize 
\begin{align}
\min \ & \ \sum_{v\in V} (\underline{\tilde{c}}_v + \tilde{d}_v) \tilde{x}_v \label{dwc-1}\\
\text{s.t.} \ & \ \sum_{e\in \delta^+_p} f^{e}_{pq} - \sum_{e\in \delta^-_p} f^{e}_{pq} = \tilde{x}_p \tilde{x}_q & \forall p,q\in V \label{dwc-2}\\
& \ \sum_{e\in \delta^+_q} f^{e}_{pq} - \sum_{e\in \delta^-_q} f^{e}_{pq} = -\tilde{x}_p \tilde{x}_q & \forall p,q\in V  \label{dwc-3}\\
& \ \sum_{e\in \delta^+_v} f^{e}_{pq} - \sum_{e\in \delta^-_v} f^{e}_{pq} = 0 & \forall p,q\in V,\; \forall v\in V\backslash\{p,q\} \label{dwc-4}\\
& \ \sum_{i\in\mathcal{N}_v} \tilde{x}_i \ge 1, & \forall v \in V  \label{dwc-5}\\
& \ f^{e}_{pq} \le \min \{\tilde{x}_i,\tilde{x}_j\} & \forall p,q\in V,\; \forall e=(i,j)\in E \label{dwc-7}\\
& \ f^{e}_{pq} \le \min \{\tilde{x}_p,\tilde{x}_q\} & \forall p,q\in V,\; \forall e\in E \label{dwc-8}\\
& \ f^{e}_{pq} \in [0,1], & \forall p,q\in V,\; \forall e\in E \label{dwc-9}\\
& \ \tilde{x}_v \in \{0,1\}, & \forall v \in V \label{dwc-10}
\end{align}
\normalsize
where, $\mathcal{N}_v$ denotes the set of neighboring nodes of $v$ in $M$. In this formulation, constraints (\ref{dwc-2},\ref{dwc-3},\ref{dwc-4}) ensure the existence of at least one path connecting every pair of nodes selected for regenerator placement. Furthermore, constraint (\ref{dwc-5}) guarantees that each node in the network is adjacent to at least one regenerator-equipped node.

Using constraints (\ref{dwc-2},\ref{dwc-3},\ref{dwc-4}), a unit of flow is sent from every node $p \in V$ to every node $q \in V$ with $p \neq q$, through some path $f$. In this construction, the total outgoing flow at node $p$ exceeds its incoming flow by one unit whenever both $p$ and $q$ belong to the regenerator-equipped set; otherwise, the inflow and outflow at $p$ must be equal. A symmetric condition applies at node $q$: its incoming flow exceeds its outgoing flow by one unit if both nodes are selected, and the two flows are equal otherwise. For any internal node on a feasible $p-q$ path, the incoming and outgoing flows must be equal.

This formulation introduces nonlinearity in constraints (\ref{dwc-2}) and (\ref{dwc-3}). To address this, we apply a standard McCormick linearization by introducing an auxiliary variable, as follows:

\footnotesize 
\begin{align*}
& t_{pq} = \tilde{x}_p \tilde{x}_q & \forall p,q\in V\\
& t_{pq} \le \min(\tilde{x}_p,\tilde{x}_q) & \forall p,q\in V\\
& t_{pq} \ge \max(0,\tilde{x}_p - (1-\tilde{x}_q))& \forall p,q\in V
\end{align*}
\normalsize
thus, $t_{pq}=1$ if and only if both $\tilde{x}_p=1$ and $\tilde{x}_q=1$. In this sense, we can obtain the linear formulation of the deterministic worst-case (DWC) model for the RLP as follows:

\footnotesize
\begin{align}
\min \ & \ \sum_{v\in V} (\underline{\tilde{c}}_v + \tilde{d}_v) \tilde{x}_v \\
\text{s.t.} \ & \ \sum_{e\in \delta^+_p} f^{e}_{pq} - \sum_{e\in \delta^-_p} f^{e}_{pq} = t_{pq} & \forall p,q\in V \\
& \ \sum_{e\in \delta^+_q} f^{e}_{pq} - \sum_{e\in \delta^-_q} f^{e}_{pq} = -t_{pq} & \forall p,q\in V \\
& \ \sum_{e\in \delta^+_v} f^{e}_{pq} - \sum_{e\in \delta^-_v} f^{e}_{pq} = 0 & \forall p,q\in V,\; \forall v\in V\backslash\{p,q\} \\
& \ \sum_{i\in\mathcal{N}_v} \tilde{x}_i \ge 1, & \forall v \in V \\
& \ t_{pq} \le \min(\tilde{x}_p,\tilde{x}_q), & \forall p,q\in V \\
& \ t_{pq} \ge \max(0,\tilde{x}_p - (1-\tilde{x}_q)), & \forall p,q\in V \\
& \ f^{e}_{pq} \le \min \{\tilde{x}_i,\tilde{x}_j\} & \forall p,q\in V,\; \forall e=(i,j)\in E \\
& \ f^{e}_{pq} \le \min \{\tilde{x}_p,\tilde{x}_q\} & \forall p,q\in V,\; \forall e\in E \\
& \ f^{e}_{pq} \in [0,1], & \forall p,q\in V,\; \forall e\in E \\
& \ \tilde{x}_v \in \{0,1\}, & \forall v \in V \\
& \ t_{pq} \in \{0,1\}, & \forall p,q\in V 
\end{align}
\normalsize


\subsection{Static Budgeted Uncertainty}
\label{subsec:static-budgeted-uncertainty}

Given a source node $s \in V$ and a destination node $t \in V$, in the first uncertain setting we seek a path whose cost is minimized under the corresponding worst-case realization, as defined by the following budgeted uncertainty set:

\footnotesize
\[
\mathcal{U}_e^{\prime} = \Big\{ \bm{c}:\ c_e = \underline{c}_e + d_e y_e,\ \sum_{e} y_e \le \Gamma_e,\ y_e \in \{0,1\} \Big\}.\\
\]
\normalsize

Now, let $x_e \in \{0,1\}$ be a similar binary decision variable as introduced in section \ref{subsec:deterministic-worst-case}. The robust shortest path problem can be formulated as:

\footnotesize
\begin{equation*}
\min_{x \in \mathcal{X}} \left\{ \sum_{e \in E} c_e x_e + \max_{z \in \mathcal{Z}} \sum_{e \in E} d_e x_e z_e \right\},
\end{equation*}
\normalsize
where $\mathcal{X}$ denotes the set of feasible $s-t$ paths, enforced through flow conservation constraints:

\footnotesize
\begin{equation*}
\sum_{e \in \delta^+(v)} x_e - \sum_{e \in \delta^-(v)} x_e =
\begin{cases}
 1 & \text{if } v = s, \\
-1 & \text{if } v = t, \\
 0 & \text{otherwise},
\end{cases}
\quad \forall v \in V.
\end{equation*}
\normalsize

The inner maximization (adversarial) problem over $z$ is linear subject to the budgeted uncertainty constraint. By dualizing this problem, we obtain a tractable robust counterpart to construct the graph $M$.

\footnotesize
\begin{align*}
\min \quad & \sum_{e \in E} c_e x_e + \Gamma \pi + \sum_{e \in E} \lambda_e, \\
\text{s.t.} \quad & \pi + \lambda_e \geq d_e x_e \quad & \forall e \in E, \\
& x_e \in \{0, 1\} \quad & \forall e \in E, \\
& \lambda_e \geq 0 & \forall e \in E, \\
& \pi \geq 0
\end{align*}
\normalsize

Once $M$ has been determined, we can formulate the mathematical model for solving the robust Min--Max RLP under the static budgeted uncertainty set, and for simplicity we refer to it as the static RLP. For this purpose, we first express the adversarial problem corresponding to a given solution $\tilde{x}$ (defined similar to that in Section~\ref{subsec:deterministic-worst-case}), as follows:

\footnotesize
\begin{align*}
\max & \sum_{i\in V} \tilde{c}_i \tilde{x}_i \tilde{z}_i \tag{ADV}\\
\text{s.t.} & \sum_{v\in V} \tilde{z}_i \leq \Gamma_v \\
& \tilde{z}_v \in [0,1] & \forall v\in V
\end{align*}
\normalsize
Now we can substitute the inner maximization of the Min--Max formulation with the dual of the ADV. Applying the same linearization technique used in Section~\ref{subsec:deterministic-worst-case}, the formulation of the robust static budgeted (RSB) problem becomes:

\footnotesize
\begin{align}
\min \ & \ \sum_{v\in V} \tilde{c}_v \tilde{x}_v + \Gamma_v\pi + \sum_{v\in V} \lambda_i \label{SBRLP-1}\\
\text{s.t.} \ & \ \sum_{e\in \delta^+_p} f^{e}_{pq} - \sum_{e\in \delta^-_p} f^{e}_{pq} = t_{pq} & \forall p,q\in V \label{SBRLP-2}\\
& \ \sum_{e\in \delta^+_q} f^{e}_{pq} - \sum_{e\in \delta^-_q} f^{e}_{pq} = -t_{pq} & \forall p,q\in V \label{SBRLP-3}\\
& \ \sum_{e\in \delta^+_v} f^{e}_{pq} - \sum_{e\in \delta^-_v} f^{e}_{pq} = 0 & \forall p,q\in V,\; \forall v\in V\backslash\{p,q\} \label{SBRLP-4}\\
& \ \sum_{i\in\mathcal{N}_v} \tilde{x}_i \ge 1, & \forall v \in V \label{SBRLP-5}\\
& \ \pi + \lambda_v \geq \tilde{c}_v \tilde{x}_v & \forall v\in V \label{SBRLP-6}\\
& \ t_{pq} \le \min(\tilde{x}_p,\tilde{x}_q), & \forall p,q\in V \label{SBRLP-7}\\
& \ t_{pq} \ge \max(0,\tilde{x}_p - (1-\tilde{x}_q)), & \forall p,q\in V \label{SBRLP-8}\\
& \ f^{e}_{pq} \le \min \{\tilde{x}_i,\tilde{x}_j\} & \forall p,q\in V,\; \forall e=(i,j)\in E \label{SBRLP-9}\\
& \ f^{e}_{pq} \le \min \{\tilde{x}_p,\tilde{x}_q\} & \forall p,q\in V,\; \forall e\in E \label{SBRLP-10}\\
& \ f^{e}_{pq} \in [0,1], & \forall p,q\in V,\; \forall e\in E \label{SBRLP-11}\\
& \ \tilde{x}_v \in \{0,1\}, & \forall v \in V \label{SBRLP-12}\\
& \ t_{pq} \in \{0,1\}, & \forall p,q\in V \label{SBRLP-13}\\
& \ \lambda_v \ge 0, & \forall v \in V \label{SBRLP-14}\\
& \ \pi \ge 0 \label{SBRLP-15}
\end{align}
\normalsize

\subsection{Dynamic Budgeted Uncertainty}
\label{subsec:dynamic-budgeted-uncertainty}

In practical optical networks, physical factors such as temperature variations, and maintenance activities introduce time-dependent fluctuations in effective signal reach and fiber quality. To account for these effects, we adopt the dynamic budgeted uncertainty set for edge lengths. Accordingly, we first solve a robust shortest path problem under dynamic budgeted uncertainty (DSP) to capture the periodic evolution of effective link lengths over the planning horizon. The resulting graph reflects these fluctuations. We then solve the static robust weighted RLP on this graph to obtain a stable and implementable regenerator placement plan.

Let $\mathcal{T}$ denote a finite planning horizon divided into discrete time periods $t \in \mathcal{T}$. At each time $t$, every edge $e \in E$ attains a length realization $c_e^t$ derived from $\mathcal{U}_e^t$. The DSP between each pair of nodes is thus formulated as follows:

\footnotesize
\begin{align}
	\min_{\pmb{x}} \ & \max_{t} \ \sum_{e\in E} c_e^t x_e^t \label{eq:dsp}\\
	\text{s.t.} \ & \sum_{e\in \delta^+_v} x_e^t - \sum_{e\in\delta^-_v} x_e^t =
	\begin{cases}
		1, & i=p,\\
		-1, & i=q,\\
		0, & \text{otherwise,}
	\end{cases} & \ \forall v\in V,\, t\in\mathcal{T}, \\
	& x_e^t \in \{0,1\}, & \forall e\in E,\, t\in\mathcal{T}
\end{align}
\normalsize

The objective determines the path with minimum cost under the worst-performing time period. By applying standard dualization to the inner maximization over $\mathcal{U}_e^t$, the mathematical formulation of DSP, used to generate the communication graph $M$, can be written as follows:

\footnotesize
\begin{align}
	\min \ & \sum_{e\in E} \underline{c}_e x_e^t + \Gamma_e \pi^t + \sum_{e\in E} \lambda_e^t \label{eq:dsp-rob}\\
	\text{s.t.} \ & \pi^t + \lambda_e^t \ge d_e^t x_e^t & \forall e\in E,\, t\in\mathcal{T} \\
	&\sum_{e\in \delta^+_v} x_e^t - \sum_{e\in\delta^-_v} x_e^t =
	\begin{cases}
	1, & i=p,\\
	-1, & i=q,\\
	0, & \text{otherwise,}
	\end{cases} & \ \forall v\in V,\, t\in\mathcal{T}, \\
	& \pi^t \ge 0 & \forall t\in \mathcal{T}\\
	& \lambda_e^t \ge 0 & \forall e\in E, t\in \mathcal{T}\\
	& x_e^t \in \{0,1\} & \forall e\in E, t\in\mathcal{T}
\end{align}
\normalsize

We can now proceed to solve the static version of the robust RLP using the IP formulation \ref{SBRLP-1}--\ref{SBRLP-15}, as presented in Section~\ref{subsec:static-budgeted-uncertainty}. We refer to this method as RDB.


\section{Alternative Solution Approaches}
\label{sec:alternative-solution-approaches}

The solution approach introduced in Section~\ref{subsec:dynamic-budgeted-uncertainty} addresses the robust weighted RLP under both static and dynamic budgeted uncertainty sets. It first constructs the time-dependent graph $M$ using the robust shortest path formulation under dynamic budgeted uncertainty set. It subsequently solves the robust weighted RLP on $M$ under the static budgeted uncertainty set. Although effective, this approach can become computationally demanding for large-scale instances, motivating the development of more scalable techniques. In this section, we introduce three methods designed to enhance the efficiency of the RDB approach for such instances.

\subsection{Column-and-Constraint Generation}
\label{subsec:column-and-constraint-generation}

Since the dynamic parameters are already captured in forming the graph $M$, the remaining task is to solve the robust weighted RLP under the static budgeted uncertainty. To this end, we apply a \textit{column-and-constraint generation} (CCG) algorithm tailored to the robust weighted RLP under the static budgeted uncertainty on $M$ to effectively solve the problem.

Given a selected set of candidate worst-case scenarios $\mathcal{S}$ derived from the uncertainty sets $\mathcal{U}_v$ and the obtained $\mathcal{U}_e$ that define $M$, the \textit{Master Problem} (MP) solves:

\footnotesize
\begin{align*}
\min_{\tilde{\pmb{x}}\in\mathcal{X}} \ \max_{\pmb{s}\in\mathcal{S}} \ & \ \tilde{\pmb{c}}^{\pmb{s}} \tilde{\pmb{x}} \tag{MP}\\
\text{s.t.} \quad & \ \X\sim\U_e
\end{align*}
\normalsize
Then, for the optimal placement $\tilde{\pmb{x}}^*$ found from the MP, the adversary solves the following subproblem to identify a new worst-case realization.

\footnotesize
\begin{align*}
\max_{\tilde{\pmb{c}}\in\mathcal{U}_v} \ & \ \tilde{\pmb{c}} \ \tilde{\pmb{x}}^* \tag{SP}\\
\text{s.t.} \quad & \ \pmb{x}^*\sim\X\sim\U_e
\end{align*}
\normalsize

If the adversarial objective exceeds the objective value of MP, the corresponding scenario is added to $\mathcal{S}$, and the MP is solved again. Although the CCG algorithm operates solely on the static RLP, it takes as input the graph $M$ generated in the dynamic stage and thus implicitly incorporates time-dependent variations at every iteration. The procedure terminates once the duality gap between the lower bound of MP  and the upper bound of SP falls below a given threshold $\varepsilon$. This process is summarized in Algorithm~\ref{alg:ccg}.
\begin{algorithm}
	\caption{Column-and-Constraint Generation Algorithm}
	\label{alg:ccg}
	\begin{algorithmic}[1]
		\STATE \textbf{Input:} Time-dependent data $\{d_e^t\}$, $\Gamma_e$, $\mathcal{U}_v$, $\mathcal{U}_{e}$.
		\STATE Compute communication graph $M$.
		\STATE Initialize scenario set $\mathcal{S} \leftarrow \emptyset$, given threshold $\varepsilon$.
		\REPEAT
		\STATE Solve the \textbf{MP} on $M$ to obtain $\tilde{\pmb{x}}^*$.
		\STATE Solve the \textbf{SP} for $\tilde{\pmb{x}}^*$.
		\IF{$\text{objective}_{SP} > \text{objective}_{MP}$}
		\STATE Add corresponding scenario to $\mathcal{S}$.
		\ENDIF
		\UNTIL{($\text{objective}_{SP} - \text{objective}_{MP}) < \varepsilon$}
		\STATE \textbf{Output:} Robust regenerator placement $\tilde{\pmb{x}}^*$.
	\end{algorithmic}
\end{algorithm}

\subsection{Benders Decomposition}
\label{subsec:benders-decomposition}

In the \textit{Benders Decomposition} (BDC) framework, we follow a structure similar to the CCG algorithm. Therefore, the DSP first provides the input data for the static RLP. In this case, the Benders framework further decomposes the static RLP into a master placement problem (MP) and a network-feasibility subproblem (SP). The MP determines a candidate regenerator placement $\tilde{\pmb{x}}$, through the following model:

\footnotesize
\begin{align*}
	\min \ & \ \sum_{v\in V} \underline{\tilde{c}}_v \tilde{x}_v + \Gamma_v \pi + \sum_{v\in V} \lambda_v + \theta \tag{MP}\\
	\text{s.t.} \ & \ \pi + \lambda_v \ge \tilde{d}_v \tilde{x}_v & \forall v\in V\\
	& \tilde{x}_v \in \{0,1\} & \forall v\in V
\end{align*}
\normalsize
Here, $\theta$ represents the subproblem’s value, which captures the connectivity feasibility cost over the graph $M$. Given $\tilde{\pmb{x}}$, the SP checks whether the chosen regenerator placement maintains connectivity across $M$ using the robust edge lengths $\pmb{c}^*$ for all $p$--$q$ pair of nodes:

\footnotesize
\begin{align*}
	\min \ & \ \sum_{e\in E_M} c^*_e f_{pq}^{e} \tag{SP}\\
	\text{s.t.} \ & \ \sum_{e\in \delta^+_v} f_{pq}^e - \sum_{e\in \delta^-_v} f_{pq}^e =
	\begin{cases}
	1, & \text{if } i = p,\\
	-1, & \text{if } i = q,\\
	0, & \text{otherwise,}
	\end{cases}
	& \ \forall v \in V, (p,q) \in \mathcal{P}\\
	& \ f_{pq}^{e} \le \min\{\tilde{x}_i,\tilde{x}_j\} & \forall (i,j)\in E_M\\
	& \ f_{pq}^{e} \in \{0,1\} & \forall e\in E_M
\end{align*}
\normalsize
where $\mathcal{P}$ is the set of all source-destination pairs that must be connected.

\begin{algorithm}[H]
	\caption{Benders Decomposition Algorithm}
	\label{alg:benders}
	\begin{algorithmic}[1]
		\STATE \textbf{Input:} Time-dependent data $\{d_e^t\}$, $\Gamma_e$, $\mathcal{U}_v$, $\mathcal{U}_{e}$.
		\STATE Compute communication graph $M$.
		\STATE Initialize set of Benders cuts $\mathcal{C} \leftarrow \emptyset$.
		\REPEAT
		\STATE Solve the \textbf{MP} with cuts $\mathcal{C}$ to obtain $\tilde{x}^*$.
		\STATE Solve the \textbf{SP} for $\tilde{x}^*$.
		\IF{SP is infeasible}
		\STATE Generate \textit{feasibility cut} and add to $\mathcal{C}$.
		\ELSE
		\STATE Generate \textit{optimality cut} $\theta \ge \sum_{(p,q)\in E_M} c_e^* {f_{pq}^{e}}^*$ and add to $\mathcal{C}$.
		\ENDIF
		\UNTIL{no violated cuts remain}
		\STATE \textbf{Output:} Optimal regenerator placement $\tilde{x}^*$.
	\end{algorithmic}
\end{algorithm}

\subsection{Iterative Robust Optimization}
\label{subsec:iterative-robust-optimization}

The \textit{Iterative Robust Optimization} (IRO) approach can also be adapted to solve the robust weighted RLP under both static and dynamic budgeted uncertainty set. Here, the DSP and static RLP stages are solved alternately, refining both the effective graph $M$ and the regenerator placement $\tilde{\pmb{x}}$ at each iteration.

At iteration $k$, the following sequence of steps is performed. First, we solve the DSP to update robust lengths and obtain $M^{(k)}$. In the next step, we solve the static RLP on $M^{(k)}$ to obtain $\tilde{\pmb{x}}^{(k)}$. Then we evaluate the worst-case realization $(\tilde{c}^{(k)}, c^{(k)})$. This process iteratively continues until the improvement is less than or equal to a given threshold $\varepsilon$. We can capture this method into the Algorithm~\ref{alg:iro}.

\begin{algorithm}[H]
	\caption{Iterative Robust Optimization Algorithm}
	\label{alg:iro}
	\begin{algorithmic}[1]
		\STATE \textbf{Input:} Initial parameters $(\underline{c}, \underline{\tilde{c}})$, $\Gamma_e$, $\Gamma_v$.
		\STATE Initialize iteration counter $k \leftarrow 0$, given threshold $\varepsilon$.
		\REPEAT
		\STATE Solve \textbf{DSP} $\rightarrow$ obtain $M^{(k)}$.
		\STATE Solve \textbf{Static RLP} on $M^{(k)}$ $\rightarrow$ get $\tilde{x}^{(k)}$.
		\STATE Solve \textbf{Adversarial Problem:}\\
		\[\max_{\mathcal{U}_v,\mathcal{U}_{e}} \ \sum_v \tilde{c}_v \tilde{x}_v^{(k)}\]
		\STATE Update $(\tilde{c}^{(k+1)}, c^{(k+1)})$.
		\STATE $k \leftarrow k + 1$.
		\UNTIL{$|Z^{(k+1)} - Z^{(k)}| / Z^{(k)} \le \varepsilon$}
		\STATE \textbf{Output:} Robust regenerator placement $\tilde{x}^{(k)}$.
	\end{algorithmic}
\end{algorithm}


\section{Learning Based Hide-and-Seek Game}
\label{sec:learning-based-hide-and-seek-game}

The robust weighted RLP under both dynamic and static budgeted uncertainty set can be reformulated as a \textit{learning-based hide-and-seek game} (HSL) between a \textit{seeker} called the network planner and a \textit{hider} referred to as the adversary. The seeker's goal is to minimize total installation cost while maintaining network connectivity at all time, whereas the adversary aims to maximize cost or disrupt connectivity by revealing adverse realizations of uncertain parameters.


\subsection{Game Definition}
\label{subsec:game-definition}

Let $\mathcal{T}$ denote a discrete set of time periods. The seeker chooses a regenerator placement vector $\tilde{\pmb{x}}\in\{0,1\}^n$, while the adversary selects the nodes and edges deviating from their nominal cost and length by setting $(\pmb{y}^t, \pmb{z})$ under the following fixed budgets.

\footnotesize
\begin{align*}
& \sum_{e\in E} y_e^t \le \Gamma_e & y_e^t\in \{0,1\}\\
& \sum_{v\in V} z_v \le \Gamma_v & z_v \in \{0,1\}
\end{align*}
\normalsize

The uncertain parameters are defined as $c_e^t = \underline{c}_e + d_e^t y_e^t$ and $\tilde{c}_v = \underline{\tilde{c}}_v + \tilde{d}_v z_v$ where $d_e^t$ are the dynamically adjusted deviations. In addition, the loss function is:

\footnotesize
\[
L^t(\tilde{x},\tilde{c},c^t) = \sum_{v\in V} \tilde{c}_v \tilde{x}_v,
\]
\normalsize
subject to network connectivity constraints based on $c^t$. Thus, the corresponding robust formulation is:

\footnotesize
\begin{align*}
\min_{\tilde{x}\in\mathcal{X}} \ \max_{t\in\mathcal{T}} \ & \ L^t(\tilde{x},\tilde{c},c^t),\\
\text{s.t.} \ & \ (y^t,z)\in(\mathcal{Y}^t,\mathcal{Z})
\end{align*}
\normalsize
where $\mathcal{Y}^t$ and $\mathcal{Z}$ are the feasible adversarial sets.

\subsection{Learning Mechanism}

To model the adaptive interaction between seeker and adversary, we employ a \textit{gradient-based} learning rule. The adversary updates the edge deviation levels $d_e^t$ in the direction of increasing seeker's cost, while the seeker adapts its placement decisions $\tilde{\pmb{x}}$ to minimize the expected cost. In this sense, at iteration $k$, for each edge $e$ and time $t$, the deviation parameter is updated in the following way:

\footnotesize
\[
d_e^{t,(k+1)} = \Pi_{[0,d_e]}\Big(d_e^{t,(k)} 
+ \eta_d \frac{\partial L^t(\tilde{x}^{(k)},\tilde{c},c^{t,(k)})}{\partial d_e^t}\Big),
\]
\normalsize
where $\eta_d>0$ is the adversary learning rate, and $\Pi_{[0,d_e]}(\cdot)$ projects the value back into the feasible range. The seeker reacts to the observed deviations by solving the following problem to reduce the expected cost under the updated uncertainty realization.

\footnotesize
\[
\tilde{\pmb{x}}^{(k+1)} = 
\arg\min_{\tilde{x}\in\mathcal{X}} 
\Big\{\mathbb{E}_{t\in\mathcal{T}}[L^t(\tilde{x},\tilde{c},c^{t,(k+1)})]\Big\},
\]
\normalsize
This gradient-based process, which is summarized in Algorithm~\ref{alg:hsl}, drives the system toward a saddle-point equilibrium, where neither players can unilaterally improve their objective.

\begin{algorithm}[H]
	\caption{Learning-Based Hide-and-Seek Algorithm}
	\label{alg:hsl}
	\begin{algorithmic}[1]
		\STATE \textbf{Input:} Initial learning rate $\eta_d$, and tolerance $\varepsilon$.
		\STATE Initialize $k \leftarrow 0$, $d_e^{t,(0)} \leftarrow d_e$, and $\tilde{x}^{(0)} \leftarrow \emptyset$.
		\REPEAT
		\STATE \textbf{Seeker step:} 
		Solve the following to obtain $\tilde{x}^{(k)}$
		\[\min_{\tilde{x}\in\mathcal{X}} \ \mathbb{E}_{t\in\mathcal{T}}[L^t(\tilde{x},\tilde{c},c^{t,(k)})]\].
		\STATE \textbf{Hider step:} 
		For each $(e,t)$, update deviations:
		\[
		d_e^{t,(k+1)} = \Pi_{[0,d_e]}\Big(d_e^{t,(k)} + \eta_d \frac{\partial L^t(\tilde{x}^{(k)},\tilde{c},c^{t,(k)})}{\partial d_e^t}\Big).
		\]
		\STATE Update edge length $c_e^{t,(k+1)} = \underline{c}_e + d_e^{t,(k+1)} y_e^t$.
		\STATE $k \leftarrow k + 1$.
		\UNTIL{$|L^t(\tilde{x}^{(k+1)},\tilde{c},c^{t,(k+1)}) - L^t(\tilde{x}^{(k)},\tilde{c},c^{t,(k)})| < \varepsilon$}
		\STATE \textbf{Output:} Placement $\tilde{x}^* = \tilde{x}^{(k)}$, stabilized deviations $d_e^{t,*}$.
	\end{algorithmic}
\end{algorithm}

This learning rule simulates a feedback-driven adaptation process. In other words, the adversary intensifies deviations where the seeker is most vulnerable, while the seeker reallocates regenerators to minimize the expected total cost.

\subsection{Numerical Example}
\label{subsec:numerical-example-gradient-hsl}

In this section, we demonstrate the HSL algorithm on the following five-node optical network.

\footnotesize
\[V = \{1,2,3,4,5\}, \quad E = \{(1,2),(2,3),(3,4),(4,5),(2,5)\}\]
\normalsize
The nominal and deviation parameters are shown in Table~\ref{tab:gradient-hsl-example}. We also set $\Gamma_e=2$, $\Gamma_v=1$, $\eta_d=0.15$, $d_{\max}=10$, and $\mathcal{T}=\{1,2,3\}$. Initially, $d_e^{t,(0)} = d_e$ for all $e,t$.

\begin{table}[H]
	\centering
	\caption{Nominal parameters for HSL example.}
	\begin{tabular}{ccc}
		\toprule
		Edge & $\underline{c}_{ij}$ & $d_{ij}$ \\
		\midrule
		(1,2) & 4 & 1.0 \\
		(2,3) & 3 & 0.8 \\
		(3,4) & 5 & 1.5 \\
		(4,5) & 4 & 1.0 \\
		(2,5) & 6 & 1.2 \\
		\bottomrule
	\end{tabular}
	\hspace{1cm}
	\begin{tabular}{ccc}
		\toprule
		Node & $\underline{\tilde{c}}_v$ & $\tilde{d}_v$ \\
		\midrule
		1 & 8 & 2 \\
		2 & 10 & 3 \\
		3 & 9 & 2 \\
		4 & 7 & 1 \\
		5 & 11 & 2 \\
		\bottomrule
	\end{tabular}
	\label{tab:gradient-hsl-example}
\end{table}

\textit{Iteration 1 (Nominal initialization):} The seeker selects nodes $\{2,4\}$ for regeneration, achieving a total cost of $17$. The adversary computes $\partial L^t / \partial d_e^t$ and finds that edges $(3,4)$ and $(2,5)$ contribute most to cost sensitivity.

\textit{Iteration 2 (First gradient update):} The adversary increases deviations on $(3,4)$ and $(2,5)$, thus we have:

\footnotesize
\begin{align*}
& d_{(3,4)}^{t,(1)} = 1.5 + 0.15 \times (0.8)=1.62\\
& d_{(2,5)}^{t,(1)} = 1.2 + 0.15(0.5)=1.27
\end{align*}
\normalsize
therefore, the seeker updates placement to $\{2,3,4\}$ with total cost $21.3$.

\textit{Iteration 3 (Learning convergence)}: Next update marginally increase deviations near $(3,4)$, but $\tilde{\pmb{x}}$ stabilizes at $\{2,4,5\}$, achieving robust connectivity with mean cost $20.6$ across all $t\in\mathcal{T}$.

The HSL algorithm converges in three iterations, maintaining full connectivity while achieving approximately 10\% lower cost than the deterministic worst-case solution. The gradient-based adaptation allows the adversary to learn vulnerability patterns efficiently without explicit enumeration of all uncertainty realizations.


\section{Experimental Results}\label{sec:experiments}

In this section, we investigate the performance of our proposed models to solve the robust weighted RLP under both static and dynamic budgeted uncertainty sets. To illustrate the effectiveness of our models, we make a comparison with the results obtained by both the DWC and RSB cases. In fact, DWC and RSB can be considered as the state-of-the-art methods to address deterministic and uncertain RLP, respectively.

\subsection{Experimental Setup}

We conduct four sets of numerical experiments to evaluate and compare the performance of the proposed solution methods. In each experiment, we vary key parameters, including the number of nodes $n$ and the adversary’s budget for attacking edges and nodes, denoted by $\Gamma_e$ and $\Gamma_v$, respectively. Throughout all experiments, we fix the maximum possible transmission distance to $d_{\max} = 1000$ and set the network density to $dens = 0.3$.

All test instances are generated using a unified procedure. For each edge $e \in E$, the nominal length $\underline{c}_e$ is drawn uniformly at random from the set $\{350,351,\ldots,600\}$, and the maximum deviation $d_e$ is selected from $\{1,2,\ldots,250\}$. Similarly, for each node $v \in V$, the nominal installation cost $\underline{\tilde{c}}_v$ is randomly chosen from $\{250,251,\ldots,300\}$, and the corresponding maximum deviation $\tilde{d}_v$ is drawn from $\{1,2,\ldots,50\}$.

In the first experiment (Exp-1), we vary the network size while keeping all other parameters fixed. Specifically, we consider $n = 10,12,\ldots,30$ and set $\Gamma_e = 2$ and $\Gamma_v = 2$. In the second experiment (Exp-2), we fix the network size at $n = 25$ and vary the uncertainty budgets by setting $\Gamma_e \in \{1,2\}$ and $\Gamma_v \in \{1,2,3\}$. All instances in Exp-1 and Exp-2 are solved using the DWC, RSB, and RDB approaches, allowing for a direct comparison of their performance.

To evaluate scalability, we generate larger instances following the same procedure. In the third experiment (Exp-3), we set $n = 40,42,\ldots,60$ and fix $\Gamma_e = 2$ and $\Gamma_v = 2$. In the fourth experiment (Exp-4), we fix the network size at $n = 50$ and consider $\Gamma_e \in \{1,2\}$ and $\Gamma_v \in \{1,2,3\}$, while additionally setting $\eta_d = 0.1$ and limiting the maximum number of iterations to 10. All instances in Exp-3 and Exp-4 are solved using the DWC, BDC, CCG, IRO, and HSL methods for comparative analysis.

For each experiment and every parameter combination, we generate and solve 50 independent instances using IBM ILOG CPLEX version 22.1.1 on an Intel\textsuperscript{\textregistered} Core\textsuperscript{TM} i7-10510U CPU running at 1.80~GHz with 15~GB of RAM. All computations are performed using a single processing thread.

\subsection{Results of Small Instances}

The average cost-related results for Exp-1 are summarized in Table~\ref{table:exp1-results}. The column labeled \emph{R-DWC} reports the average percentage improvement in the final objective value achieved by each method relative to the DWC instance. Similarly, the column \emph{R-RSB} indicates the average percentage improvement with respect to the RSB approach. The results show that the performance gap between RSB and DWC is relatively small, indicating only marginal benefits from incorporating static robustness alone. In contrast, the proposed RDB approach consistently outperforms both RSB and DWC, achieving cost reductions of up to approximately 10\%, which highlights the effectiveness of incorporating dynamic uncertainty into the modeling framework.

Figure~\ref{fig:exp1-results} presents both the average and instance-wise comparisons (using \textit{Performance Profile}, see Appendix~\ref{app:performance-profile}) of the computational time for Exp-1. As expected, the solution time for all three approaches increases with the number of nodes, reflecting the growing complexity of the instances. While the RDB method yields superior solution quality, it also incurs the highest computational cost, making it the slowest among the considered approaches in this experiment.

\begin{table}[htbp]
	\begin{center}
		\caption{Results of Exp-1}\label{table:exp1-results}
		\begin{tabular}{|c|c|c|c|c|c|c|}
			\hline
			\multirow{2}{*}{n} & \multicolumn{1}{|c|}{DWC} & \multicolumn{2}{|c|}{RSB} & \multicolumn{3}{|c|}{RDB} \\
			\cline{2-7}
			 & Cost & Cost & R-DWC & Cost & R-DWC & R-RSB \\
			\hline
			10 & 855.72 & 846.64 & 1.02 $\%$ & 818.38 & 04.27 $\%$ & 03.29 $\%$ \\
			12 & 875.04 & 865.04 & 1.10 $\%$ & 835.58 & 04.37 $\%$ & 03.33 $\%$ \\
			14 & 858.68 & 850.18 & 0.92 $\%$ & 801.28 & 06.48 $\%$ & 05.62 $\%$ \\
			16 & 877.56 & 868.58 & 0.98 $\%$ & 824.60 & 05.81 $\%$ & 04.88 $\%$ \\
			18 & 905.48 & 892.26 & 1.40 $\%$ & 856.44 & 05.17 $\%$ & 03.84 $\%$ \\
			20 & 858.66 & 849.48 & 1.02 $\%$ & 797.26 & 06.70 $\%$ & 05.73 $\%$ \\
			22 & 881.54 & 873.10 & 0.92 $\%$ & 777.54 & 11.24 $\%$ & 10.44 $\%$ \\
			24 & 879.12 & 869.36 & 1.08 $\%$ & 784.56 & 10.50 $\%$ & 09.52 $\%$ \\
			26 & 871.98 & 862.50 & 1.04 $\%$ & 784.60 & 09.51 $\%$ & 08.56 $\%$ \\
			28 & 872.80 & 863.54 & 1.03 $\%$ & 770.44 & 11.19 $\%$ & 10.29 $\%$ \\
			30 & 866.70 & 858.80 & 0.86 $\%$ & 762.74 & 11.69 $\%$ & 10.92 $\%$ \\
			\hline
		\end{tabular}
	\end{center}
\end{table}
\begin{figure}[htbp]
	\begin{center}
		\subfigure[Solution Time]{\includegraphics[width=0.23\textwidth]{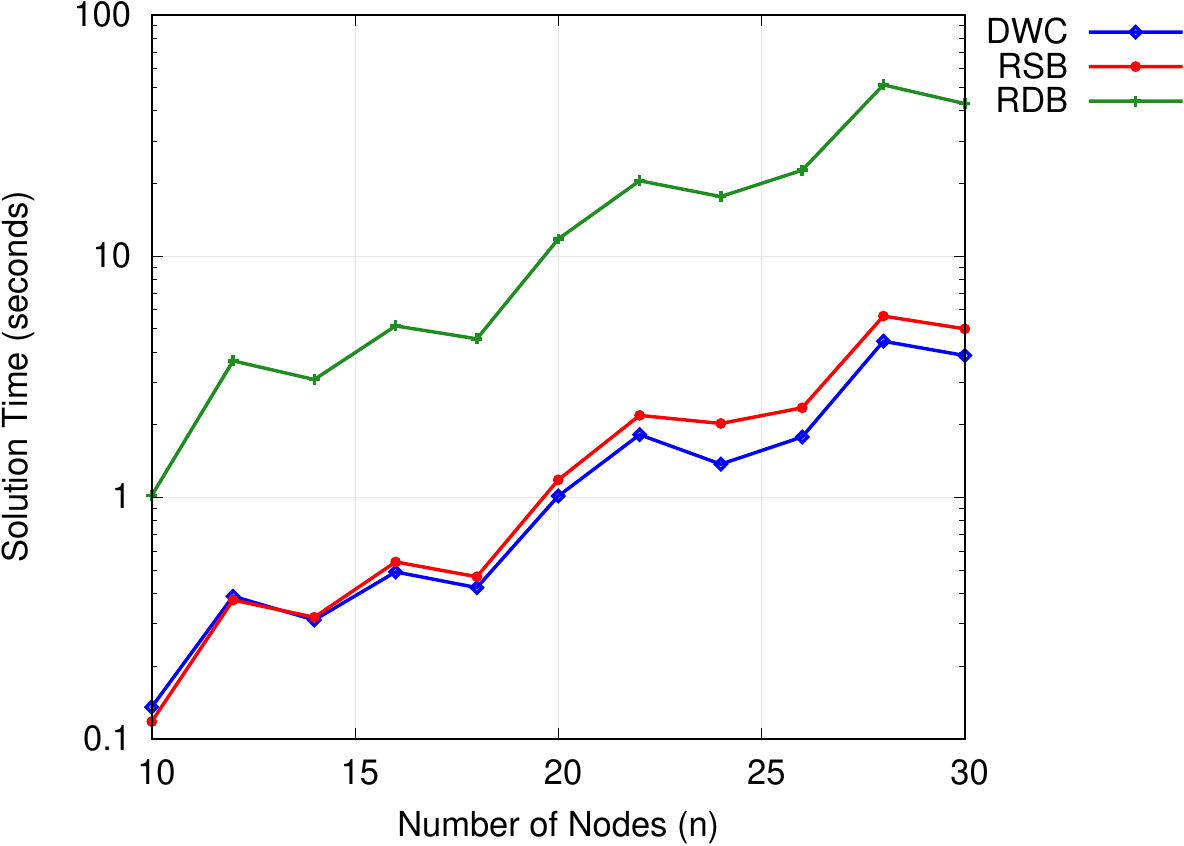}}
		\subfigure[Performance Profile]{\includegraphics[width=0.23\textwidth]{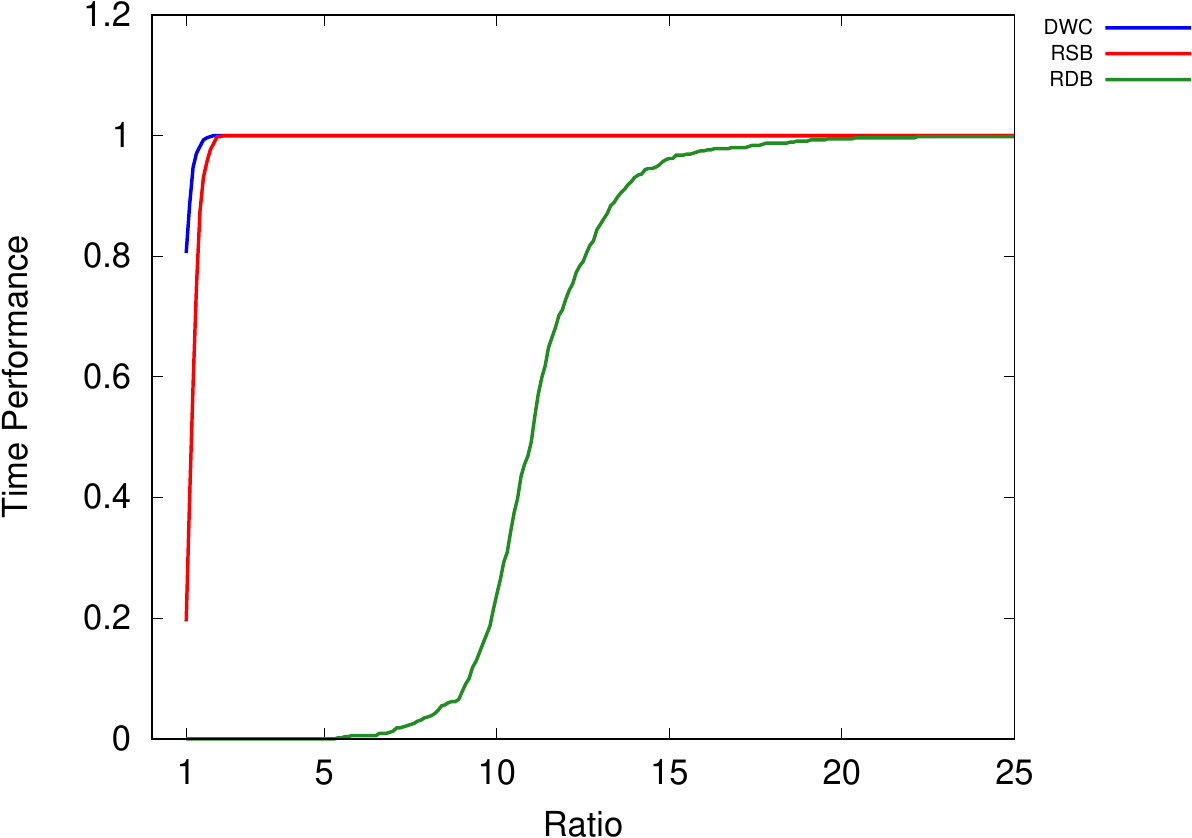}}
	\end{center}
	\caption{Time Performance of Exp-1}\label{fig:exp1-results}
\end{figure}

Table~\ref{table:exp2-results} reports the average cost-related outcomes for Exp-2. The results indicate that, particularly when $\Gamma_e = 2$, the difference in performance between RSB and DWC remains limited, suggesting that the added conservatism of static robustness leads to only minor improvements in solution quality. By contrast, the RDB approach exhibits a clear and consistent advantage over both DWC and RSB. In particular, it delivers substantial cost savings of up to about 28\% when $\Gamma_e = 1$ and around 11\% when $\Gamma_e = 2$, underscoring the benefit of explicitly accounting for dynamic uncertainty in edge parameters.

The computational time results for Exp-2, illustrated in Figure~\ref{fig:exp2-results}, show that variations in the uncertainty budgets $\Gamma_e$ and $\Gamma_v$ have a negligible impact on the overall solution time of each method. The relative ranking of the approaches remains unchanged: RDB requires the largest computational effort, while DWC and RSB are comparatively faster. These observations are consistent with those of Exp-1.

\begin{table}[htbp]
	\begin{center}
		\caption{Results of Exp-2}\label{table:exp2-results}
		\begin{tabular}{|c|c|c|c|c|c|c|c|}
			\hline
			\multirow{2}{*}{$\Gamma_e$} & \multirow{2}{*}{$\Gamma_v$} & \multicolumn{1}{|c|}{DWC} & \multicolumn{2}{|c|}{RSB} & \multicolumn{3}{|c|}{RDB} \\
			\cline{3-8}
			& & Cost & Cost & R-DWC & Cost & R-DWC & R-RSB \\
			\hline
			\multirow{3}{*}{1}  & 1 & 859.5 & 721.74 & 15.59 & 612.50 & 28.03 & 13.76 \\
								& 2 & 859.5 & 739.10 & 13.60 & 625.56 & 26.53 & 13.99 \\
								& 3 & 859.5 & 744.60 & 12.98 & 626.98 & 26.38 & 14.34 \\
			\hline
			\multirow{3}{*}{2} 	& 1 & 859.5 & 830.00 & 03.34 & 761.54 & 10.95 & 07.82 \\
								& 2 & 859.5 & 850.46 & 01.00 & 779.80 & 08.85 & 07.94 \\
								& 3 & 859.5 & 858.98 & 00.05 & 785.50 & 08.21 & 08.17 \\
			\hline
		\end{tabular}
	\end{center}
\end{table}
\begin{figure}[htbp]
	\begin{center}
		\subfigure[Solution Time]{\includegraphics[width=0.24\textwidth]{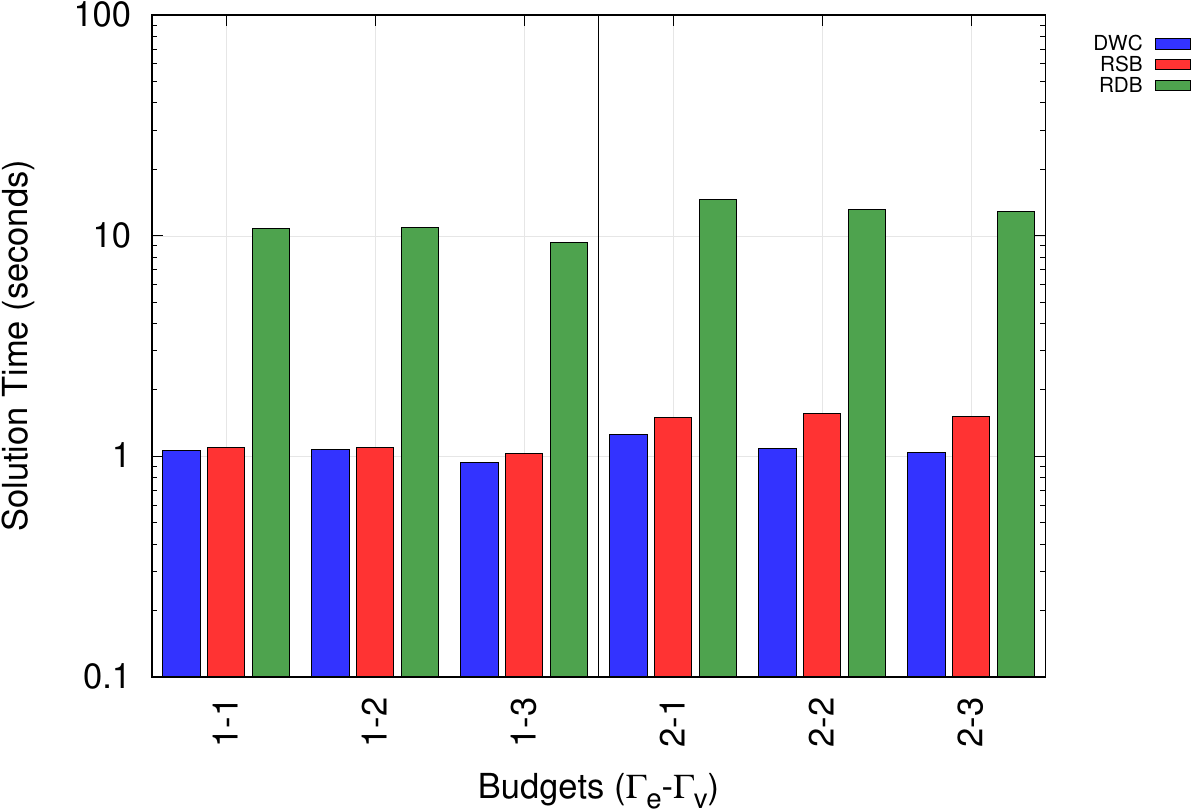}}
		\subfigure[Performance Profile]{\includegraphics[width=0.23\textwidth]{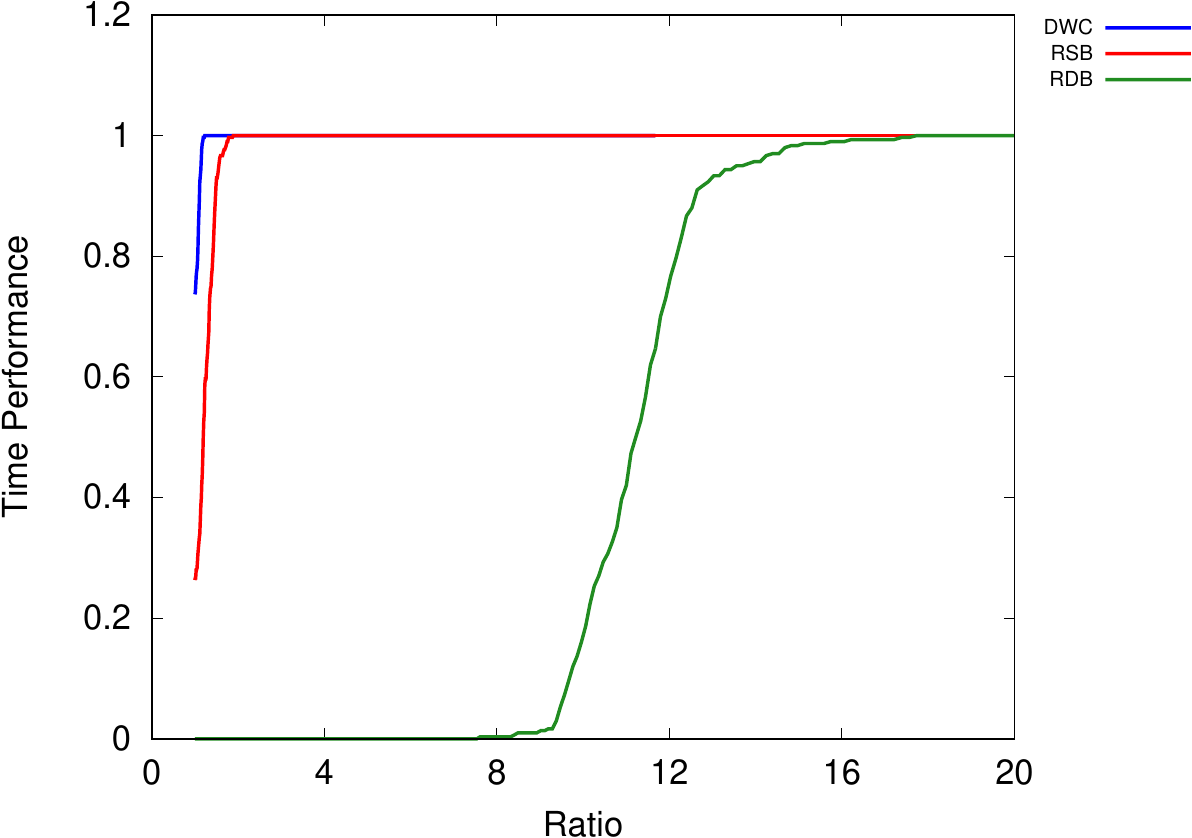}}
	\end{center}
	\caption{Time Performance of Exp-2}\label{fig:exp2-results}
\end{figure}

\subsection{Results of Large Instances}

The average cost-related results for Exp-3 are reported in Table~\ref{table:exp3-results}. The column labeled \emph{RDB Cost} corresponds to the objective value achieved by all alternative methods, since each of them is solved to optimality in this experiment. The results show that the proposed solution approaches are able to further reduce the final cost by up to 2.40\%, demonstrating that even for larger instances with tighter computational limits, meaningful improvements can still be obtained. In addition, the table highlights clear differences in convergence behavior across methods: the CCG approach requires the smallest number of iterations to reach optimality, followed by the BDC method. Although both HSL and IRO are capable of achieving comparable cost reductions, they typically require largerer number of iterations, which reflects a slower convergence rate.

Figure~\ref{fig:exp3-results} illustrates the average and instance-wise computational times for Exp-3. Consistent with the iteration counts, CCG exhibits the lowest solution times among all considered methods, in most cases even outperforming the DWC method. This result positions CCG as the most efficient approach for solving large-scale instances of the problem. Moreover, the observed computational times closely follow the relative iteration rankings of the methods, further confirming the strong relationship between convergence speed and overall computational performance.

\begin{table}[htbp]
	\begin{center}
		\caption{Results of Exp-3}\label{table:exp3-results}
		\begin{tabular}{|c|c|c|c|c|c|c|c|}
			\hline
			\multirow{2}{*}{$n$} & DWC & RDB & \multirow{2}{*}{R-DWC} & \multicolumn{4}{|c|}{Iterations}\\
			\cline{5-8}
			& Cost & Cost &  & BDC & CCG & IRO & HSL \\
			\hline
			40 & 864.12 & 855.04 & 0.97 & 2.00 & 2.00 & 2.40 & 2.06 \\
			42 & 876.28 & 866.56 & 1.06 & 2.00 & 2.00 & 2.48 & 2.18 \\
			44 & 842.52 & 834.12 & 0.97 & 2.04 & 2.00 & 2.56 & 2.22 \\
			46 & 847.44 & 841.80 & 0.64 & 2.06 & 2.02 & 2.58 & 2.26 \\
			48 & 849.60 & 828.68 & 2.18 & 2.06 & 2.02 & 2.60 & 2.34 \\
			50 & 837.32 & 829.56 & 0.86 & 2.12 & 2.04 & 2.60 & 2.36 \\
			52 & 844.00 & 834.48 & 1.10 & 2.12 & 2.06 & 2.62 & 2.36 \\
			54 & 808.12 & 789.76 & 2.05 & 2.16 & 2.08 & 2.68 & 2.38 \\
			56 & 807.04 & 791.76 & 1.82 & 2.18 & 2.12 & 2.68 & 2.42 \\
			58 & 825.80 & 805.72 & 2.40 & 2.18 & 2.12 & 2.78 & 2.42 \\
			60 & 794.48 & 788.84 & 0.68 & 2.20 & 2.16 & 2.86 & 2.44 \\
			\hline
		\end{tabular}
	\end{center}
\end{table}

\begin{figure}[htbp]
	\begin{center}
		\subfigure[Solution Time]{\includegraphics[width=0.23\textwidth]{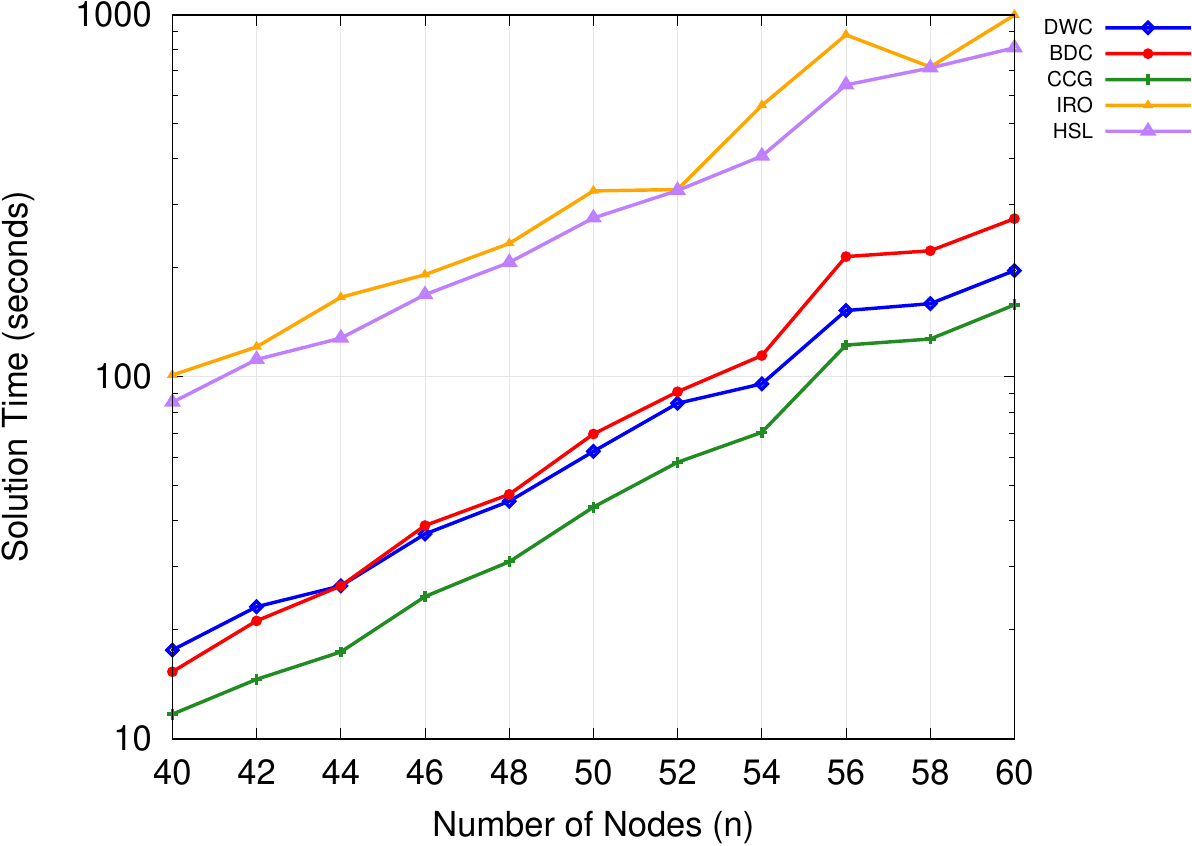}}
		\subfigure[Performance Profile]{\includegraphics[width=0.23\textwidth]{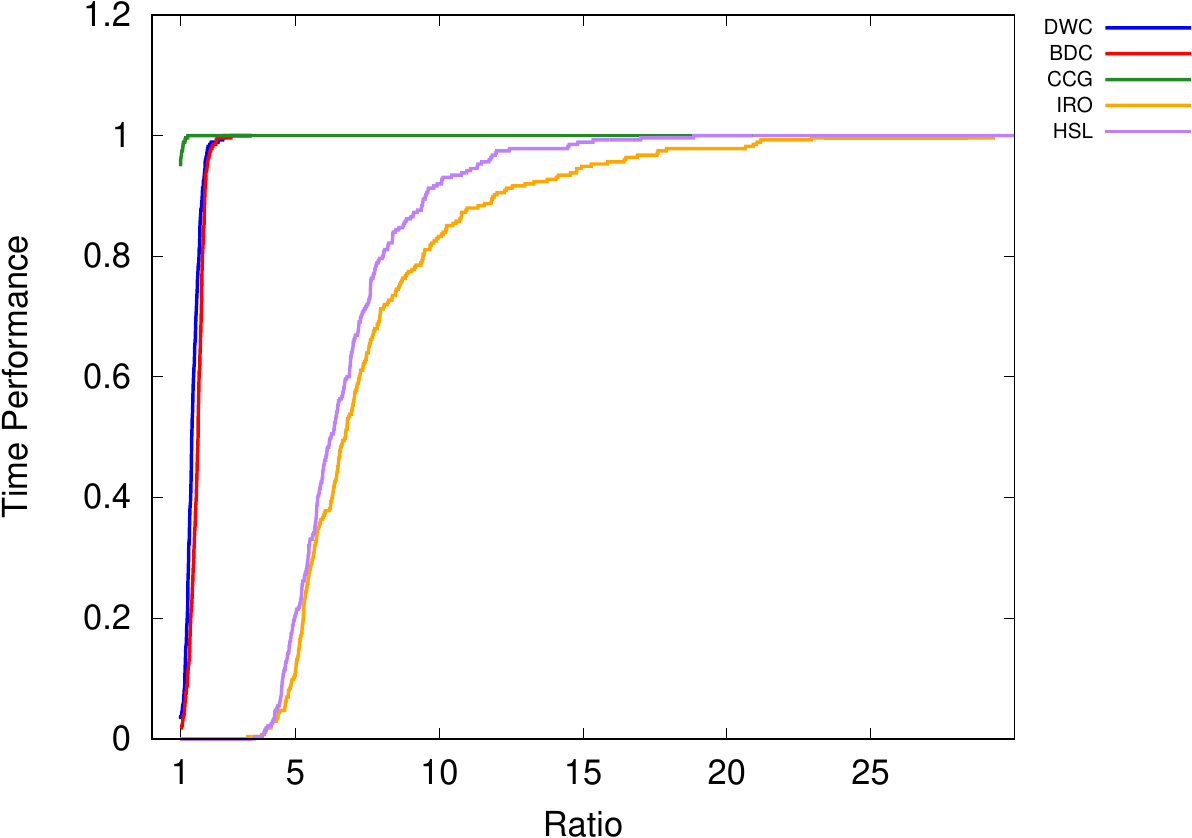}}
	\end{center}
	\caption{Time Performance of Exp-3}\label{fig:exp3-results}
\end{figure}

Table~\ref{table:exp4-results} presents the average cost-related outcomes for Exp-4. The results show that, especially in the case where $\Gamma_e = 1$, the performance gap between the proposed solution approaches and the DWC is substantial, reaching reductions of up to 31\% in the final cost. This observation underscores the benefit of incorporating more advanced robust optimization strategies when the adversarial budget on edges is limited. Furthermore, the table reveals notable differences in convergence behavior among the methods, which are consistent with the trends observed in Exp-3, with decomposition-based approaches converging in fewer iterations compared to iterative methods.

Figure~\ref{fig:exp4-results} depicts both the average and instance-wise computational times for Exp-4. In line with the iteration statistics, the CCG method consistently achieves the shortest solution times among all approaches and, in many instances, even outperforms the DWC model. These results identify CCG as the most computationally efficient method for handling large-scale instances of the problem. Additionally, the close correspondence between solution times and iteration counts further confirms the strong link between convergence speed and overall computational performance.

\begin{table}[htbp]
	\begin{center}
		\caption{Results of Exp-4}\label{table:exp4-results}
		\begin{tabular}{|c|c|c|c|c|c|c|c|c|}
			\hline
			\multirow{2}{*}{$\Gamma_e$} & \multirow{2}{*}{$\Gamma_v$} & DWC & RDB & \multirow{2}{*}{RDWC} & \multicolumn{4}{|c|}{Iterations}\\
			\cline{6-9}
			 & & Cost & Cost &  & BDC & CCG & IRO & HSL \\
			\hline
			\multirow{3}{*}{1}  & 1 & 837.3 & 569.5 & 31.25 & 2.00 & 2.00 & 2.16 & 2.12 \\
								& 2 & 837.3 & 578.4 & 30.21 & 2.02 & 2.00 & 2.16 & 2.12 \\
								& 3 & 837.3 & 578.8 & 30.16 & 2.04 & 2.00 & 2.16 & 2.14 \\
			\hline
			\multirow{3}{*}{2}  & 1 & 837.3 & 812.2 & 02.92 & 2.04 & 2.02 & 2.44 & 2.36 \\
								& 2 & 837.3 & 829.5 & 00.86 & 2.06 & 2.04 & 2.46 & 2.36 \\
								& 3 & 837.3 & 835.1 & 00.22 & 2.10 & 2.06 & 2.48 & 2.40 \\
			\hline
		\end{tabular}
	\end{center}
\end{table}

\begin{figure}[htbp]
	\begin{center}
		\subfigure[Solution Time]{\includegraphics[width=0.24\textwidth]{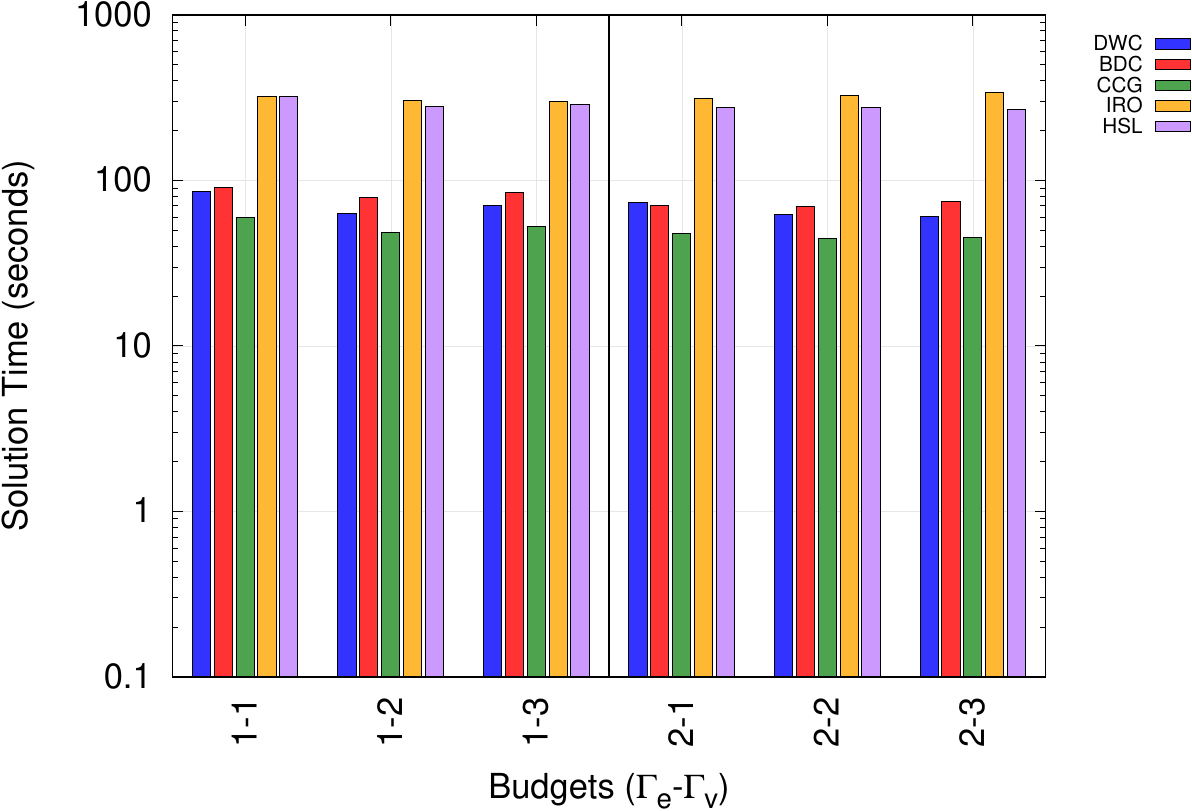}}
		\subfigure[Performance Profile]{\includegraphics[width=0.23\textwidth]{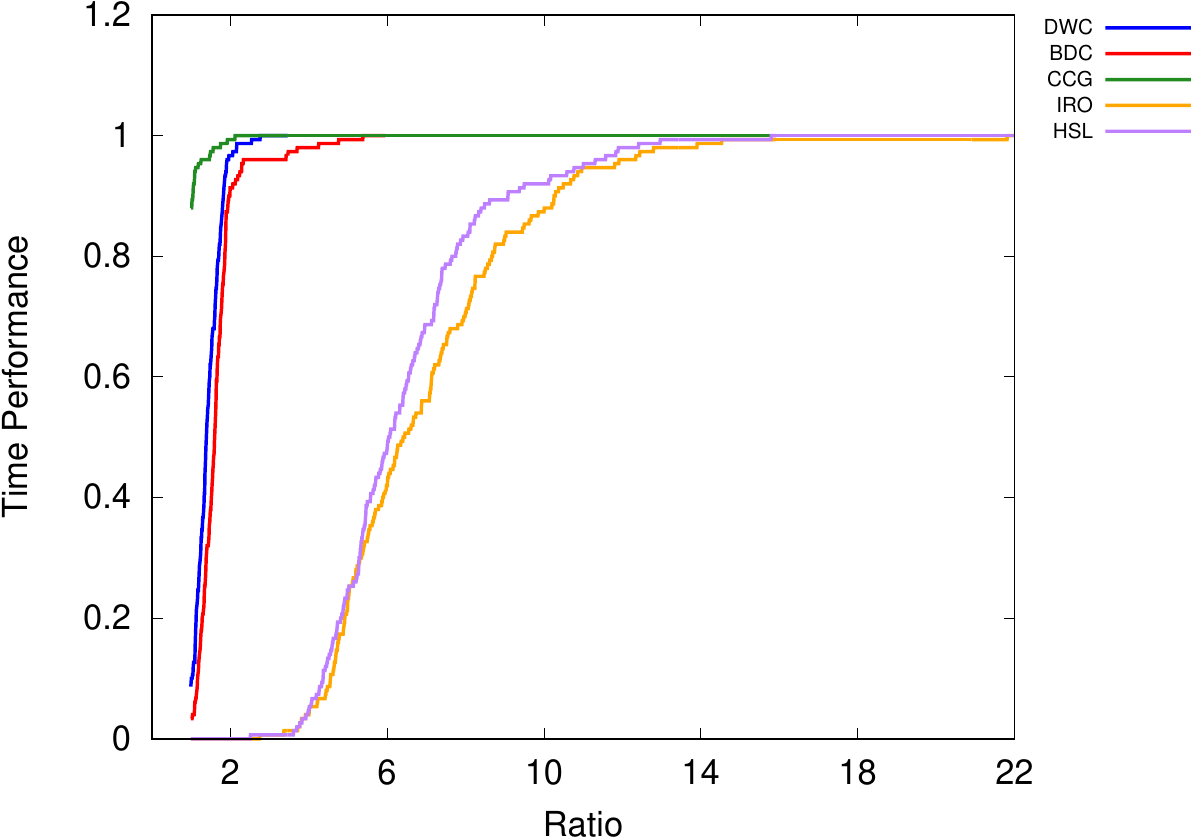}}
	\end{center}
	\caption{Time Performance of Exp-4}\label{fig:exp4-results}
\end{figure}


\section{Conclusion}\label{sec:conclusion}

The rapid evolution of communication technologies has fundamentally reshaped the way organizations and enterprises operate and collaborate. The widespread adoption of virtualization, edge computing, and intelligent automation has intensified the demand for integrated, adaptive, and resilient network infrastructures. In this context, robustness and stability have emerged as critical design objectives, not only to preserve network performance under uncertainty but also to ensure the continuity of essential services in the presence of disruptions or adverse operating conditions.

In this paper, we introduced a novel survivable network design framework in which the network is subject to two distinct sources of uncertainty. Node installation costs are modeled through a static budgeted uncertainty set, while edge lengths are captured via a dynamic budgeted uncertainty set that evolves over time and directly affects the network structure. To address this setting, we developed three modeling paradigms. First, we examined a deterministic worst-case formulation in which all node and edge parameters simultaneously deviate to their upper bounds. Next, we formulated a robust model in which both node costs and edge lengths are governed by static budgeted uncertainty. Finally, we proposed a new and more realistic formulation that combines static budgeted uncertainty for node costs with dynamic budgeted uncertainty for edge lengths, thereby capturing temporal variations in network conditions.

To effectively solve the resulting models, we developed several alternative solution approaches, including Benders decomposition, column-and-constraint generation, and iterative robust optimization. In addition, we proposed a learning-based hide-and-seek framework that interprets the robust optimization process as a repeated game between a decision-maker and an adversary. Extensive computational experiments demonstrated the effectiveness of the proposed methods, particularly for large-scale instances, and highlighted the advantages of incorporating dynamic uncertainty into the modeling framework.

Several promising directions for future research remain open. Developing efficient heuristic or metaheuristic methods for the graph transformation and regenerator placement components, either separately or in an integrated manner, could further improve scalability. Moreover, the framework could be extended by considering alternative uncertainty representations, such as dynamic interval uncertainty, or by explicitly modeling edge failures and survivability constraints. These extensions would further enhance the practical relevance of robust regenerator placement models in next-generation communication networks.



\appendices

\section{Proof of Theorem~\ref{theorem:complexity}}\label{app:proof-theorem-complexity}
\begin{proof}
	In our setting, we consider two optimization problems under budgeted uncertainty: the shortest path problem and the regenerator location problem.
	
	Inspired by \cite{bertsimas2003robust}, it is known that robust combinatorial optimization problems with budgeted uncertainty sets can be decomposed into $O(n)$ nominal problems, and therefore the computational complexity of the robust formulation coincides with that of its nominal counterpart. In addition, based on the results of \cite{ahuja1994network}, the deterministic shortest path problem is solvable in polynomial time. As a result, the robust counterpart of the shortest path problem under budgeted uncertainty admits a polynomial-time solution.
	
	On the other hand, due to the proof provided in \cite{li2017regenerator}, the RLP is NP-hard. Consequently, the robust weighted RLP under a budgeted uncertainty set remains NP-hard. It follows that the problem considered in this paper is NP-hard.
\end{proof}

\section{Performance Profile}\label{app:performance-profile}

To evaluate solution times for each problem, and in addition to presenting the normal solution times, we use performance profiles as introduced in \cite{goerigk2024robust}. We briefly recall this concept: Let $\cS$ be the set of considered models, $\K$ the set of instances and $t_{k,s}$ the run time of model $s$ on instance $k$. We assume $t_{k,s}$ is set to infinity (or large enough) if model $s$ does not solve instance $k$ within the time limit. The percentage of instances for which the performance ratio of solver $s$ is within a factor $\tau \geq 1$ of the best ratio of all solvers is given by:

\footnotesize
\[k_s (\tau) = \frac{1}{\lvert \K \rvert} \; \Bigg\lvert \Bigg\{ k\in \K \; \vert \; \frac{t_{k,s}}{\text{min}_{\hat{s}\in \cS} t_{k,\hat{s}}}  \leq \tau \Bigg\} \Bigg \lvert  \]
\normalsize
Hence, the function $k_s$ can be viewed as the distribution function for the performance ratio, which is plotted in a performance profile for each model. It must be noticed that $k_s (\tau) \in [0,1]$, and better models have higher $k_s (\tau)$.

\end{document}